\newtheorem{prop}{Proposition}
\newtheorem*{remark}{Remark}
\title{A Deep Reinforcement Learning Architecture for Multi-stage Optimal Control}
\author{
  Yuguang Yang \\
  Johns Hopkins University \\
  \texttt{yyang60@jhu.edu} \\
}
\begin{document}
\maketitle

\begin{abstract}
Deep reinforcement learning for high dimensional, hierarchical control tasks usually requires the use of complex neural networks as functional approximators, which can lead to inefficiency,  instability and even divergence in the training process. Here, we introduce stacked deep Q learning (SDQL), a flexible modularized deep reinforcement learning architecture, that can enable finding of optimal control policy of control tasks consisting of multiple linear stages in a stable and efficient way. SDQL exploits the linear stage structure by approximating the Q function via a collection of deep Q sub-networks stacking along an axis marking the stage-wise progress of the whole task. By back-propagating the learned state values from later stages to earlier stages, all sub-networks co-adapt to maximize the total reward of the whole task, although each sub-network is responsible for learning optimal control policy for its own stage. This modularized architecture offers considerable flexibility in terms of environment and policy modeling, as it allows choices of different state spaces, action spaces, reward structures, and Q networks for each stage, Further, the backward stage-wise training procedure of SDQL can offers additional transparency, stability, and flexibility to the training process, thus facilitating model fine-tuning and hyper-parameter search. We demonstrate that SDQL is capable of learning competitive strategies for problems with characteristics of high-dimensional state space, heterogeneous action space(both discrete and continuous), multiple scales, and sparse and delayed rewards.
\end{abstract}

\keywords{Multi-stage optimal control \and Deep reinforcement learning}

\section{Introduction}

We consider optimal control tasks that consists of multiple linear stages. Such multi-stage optimal control problems arise from a broad range of areas\cite{Wallace2005ApplicationsProgramming, Kroemer2015TowardsTasks, Sutton2015ReinforcementEd}, including robotics, inventory planning, production planning, transportation and logistics, financial management, and others. The stage structure of a task need not refer to the passage of time, but can corresponds to external specifications or the results of subjective modeling strategy. For example, in quantitative investment, personal investment activities naturally consist of  several stages based on the investor’s age and future obligations, with each stage having different preference for risk and return. Separately, a task that involves multiple agents’ sequential cooperation (i.e., the second agent continue with where the first agent has left, but not concurrently working on the same task) has each stage associated with each individual agent. The stage structure can also be a result of a divide-and-conquer strategy that decompose a complex, temporally extended task into multiple simple ones.  For example, a robot's gripping task can be divided into a first stage of  moving the bulk part of the gripper near the object, and a second stage of fine adjustment of fingers to pick up the object. Finally, the stage structure can also be introduced as a modeling strategy to exploit model characteristics(i.e., state/action space and reward structure) at different stages. For example, in financial engineering, the optimal hedging strategies for options can be decomposed into a near-term stage and a long-term stage to exploit different dominant dynamics in the two regimes. 

Here, we aim to develop a general deep reinforcement learning architecture and training algorithm that exploit the stage structure and facilitate the learning process. We adopt a general multi-stage task setting, where different stages share a subset of state space, but allow to have different action spaces, environment model dynamics, and reward signals. We propose a learning architecture, called Stacked Deep Q Learning (SDQL), where we approximate the optimal control policy and its value function for a multi-stage task via an architecture consisting a series of deep Q sub-networks stacked together [\ref{fig:fig1}]. Each sub-network is exclusively responsible for learning and control at one stage; thus in general they are simpler and smaller neural networks that can be trained in a stably and efficient way. To ensure that all sub-networks are co-adapting to maximize the total reward of the whole task, instead of its own stage, we require that each sub-network is trained to maximize the cumulative rewards in its own stage plus the reward triggering a stage transition (i.e., finish of a stage and start of the next stage). The stage-transition reward is provided by the value function of sub-network responsible for the next stage. The corresponding training algorithm will involve a stage-wise training process from later stages to earlier stages and state value back-propagation; in this way all sub-networks thus are trained to  maximize the total reward across all stages instead of the cumulative rewards in its own stage. 

The idea of stacking multiple neural networks and training from back to front are mainly inspired by the backward induction in dynamic programming, especially its application in finite-horizon Markov decision process\cite{Puterman2014MarkovProgramming}. Using multiple Q networks in SDQL offers a number of advantages over using a complex neural network for end-to-end learning\cite{Levine2015End-to-EndPolicies}. First, the use of multiple simpler Q networks severs as a form of regulation on the network structure. Simpler network can prevent over-fitting and usually can be trained in a more efficient and stable way since each sub-network is usually responsible for learning at a smaller state space and action space. The stage-wise training also allows easier monitoring of the training process, thus adding transparency in the training process. Second, SDQL allows stacking different types of Q learning sub-networks, such as the Deep Q Network (DQN) \cite{Mnih2015Human-levelLearningb} and actor-critic architecture like Deep Deterministic Policy Gradient (DDPG) network \cite{Lillicrap2015ContinuousLearning} together), which enables flexibility in modeling  environment dynamics (homogeneous and heterogeneous) and action space (discrete and continuous) at different stage of the task. Thirdly, the use of multiple neural network also facilitates transfer leaning.  For example, a sub-network that is responsible for moving the bulk part of a gripper in an early stage can be composed with other sub-networks that is responsible for fine  movements in later stages. The reuse and combination of different sub-networks can produce control policies for different tasks. Finally, the backward stage-wise training process resembles reverse curriculum learning \cite{Bengio2009CurriculumLearning} where the agent schedules the learning process to start with easier states (e.g., states near the goal state or states the agent can achieve the goal with high probability), gradually increasing the difficulties. By stacking multiple layers together and train backward, control tasks with delayed and sparse rewards can be naturally addressed by SDQL. 

Our main contribution is the introduction of SDQL, an abstract modeling framework and architecture that allows integrating different Q learning network to tackle multi-stage optimal control tasks. We provide a stage-wise backward training algorithm and demonstrate its use in several control tasks with characteristics of high-dimensional state space, heterogeneous action space(both discrete and continuous), multiple scales, and sparse and delayed rewards.

\section{Theory}
\subsection{Preliminary: Markov decision process (MDP)}
Consider an infinite-horizon Markov decision process (MDP) represented by a tuple $(\mathcal{S}, \mathcal{A}, \mathcal{P})$
where $\mathcal{S}$ is the state space, $\mathcal{A}$ is the action space, and $\mathcal{P} = \{p(s'|s, a), s,s'\in \mathcal{S}, a\in \mathcal{A}\}$ is the state transition probability. 
The goal is compute an optimal control policy $\pi^*: \mathcal{S}\to \mathcal{A}$ such that the expected  total reward in the process
\begin{equation}J = \mathbf{E}[\sum_{t=0}^\infty \gamma^t R(s_{t+1}, a_t)]\end{equation}
is maximized, where $R(s,a):\mathcal{S}\times\mathcal{A}\to\mathbf{R}$ is the one-step reward function and $\gamma$ is the discount factor. 
In the Q learning framework, we define optimal state-action value function (also known as the $Q^*$ function) by
\begin{equation}
Q^*(s, a) = \mathbf{E}^\pi[\sum_{t=0}^\infty \gamma^t R(s_{t+1}, a_t)|s_0 = s, a_0 = a, \pi^*]
\end{equation}
which is the expected sum of rewards along the process by following the optimal policy $\pi^*$, after visiting state $s_0$ and taking action $a_0$. With $Q^*$ , the optimal policy $\pi^*$ is given by $\pi^* = \arg\max_a Q^*(s,a)$

\subsection{MDP with stage structure}

We introduce a stage structure into a goal-oriented control tasks in the following way. For a control task with $N$ linear stages, we define $N$ state space subsets $\mathcal{S}_1,..., \mathcal{S}_N$of $\mathcal{S}$  satisfying $$\mathcal{S}_1\subset \mathcal{S}_2 \subset \mathcal{S}_3 \cdots \mathcal{S}_N = \mathcal{S}.$$
Given an initial state $s_0$, we define its associated stage index by $I(s_0) = min_{i} \{i| s_0 \in \mathcal{S}_i\}$. As we evolve the system state forward $s_0\to s_1 \to \cdots$, the change stage index of a state $s_i$ is determined by a stage-transition event. Let $I(s_t) = i$. We say the transition tuple $(s_t, a_t, s_{t+1})$ triggers state-transition event such that $I(s_{t+1} = i + 1$ if $s_{t+1} \in \mathcal{S}_{i+1} - \mathcal{S}_i$ (i.e., the system moves out of $\mathcal{S}_{i}$ and enters $\mathcal{S}_{i+1}$).

In SDQL, we consider infinite horizon episodic goal-oriented tasks that have terminal states $\mathcal{S}_T \subset \mathcal{S}_N$. We assume the system state starting with $\mathcal{S}_i$ must follow consecutive stage transitions i, 2, ..., N to reach the terminate states.   

We seek the optimal control policies $\pi^* = \{\pi_1^*, \pi_2^*, ..., \pi_N^*\}$ via $N$  stage-wise optimal Q functions $Q_1^*, ..., Q_N^*$ such that $\pi_i(s) = \arg\max_a Q_i(s,a)$.
Particularly, we define $Q_i^*$ to be associated with an episodic MDP with modified reward function. The optimal $Q_i^*$ function that starts with initial state $s_0 \in \mathcal{S}_i$ and ends when $s \in \mathcal{S}_{i+1}$ or $s \in \mathcal{S}_T$, given by
\begin{equation}
Q^*_i(s, a) = \max_{\pi_i} \mathbf{E}^\pi[\sum_{t=0}^\infty \gamma^t R_i(s_{t+1}, a_t)|s_0 = s, a_0 = a, \pi_i]
\end{equation}
where $R_i$ is the \textit{modified} reward function for stage $i$. $R_i$ equals to $R$ for all state-action pair $(s, a)$ that does not trigger stage transition and equal to $R$ plus additional stage-transition reward. More formally, let $s_t$ belongs to stage $i$, and the stage reward for transition pair $(s_t, a_t, s_{t+1})$ is defined by 
\begin{equation}
R_i(s_{t+1},a_t) = \begin{cases}
R(s_{t+1}, a_t), & \text{if no stage transition}\\
R(s_{t+1}, a_t) + \gamma V^*_{i+1}(s_t), & \text{if stage transition occurs}
\end{cases}
\end{equation}
where $V^*_{i+1}(s_t)$ is the optimal state value function for stage $i+1$ given by $V^*_{i+1}(s_t) = \max_{a}Q_{i+1}(s_t,a)$. Note that because state transition event will not occur when the system is in the last stage $N$, we have $R_N = R$.
As we mention in the introduction, the introduction of state-transition rewards is the crucial step to ensure all sub-networks/Q functions are trained to  maximize the total reward across all stages instead of the cumulative rewards in its own stage. 

Notably, a MDP with stage structure defined above is \textbf{consistent} to an ordinary MDP in the following way:
\begin{prop}
Consider a $N$-stage MDP. We have 
\begin{itemize}
\item $Q^*_i(s,a) = Q^*(s,a), \forall s\in \mathcal{S}_i, i=1, ..., N.$
\item $\pi^*_i(s) = \pi^*(s), \forall s\in \mathcal{S}_i, i=1, ..., N.$
\end{itemize}
\end{prop}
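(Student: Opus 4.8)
The plan is to prove both bullet points simultaneously by backward induction on the stage index $i$, running from $i = N$ down to $i = 1$. Backward induction is essentially forced here: the modified reward $R_i$, and hence $Q_i^*$, is defined through the value function $V_{i+1}^*$ of the next stage, so the stage-$(i+1)$ identity must already be in hand before stage $i$ can be analyzed. First I would dispose of the base case $i = N$, which is immediate: since $\mathcal{S}_N = \mathcal{S}$ and no stage transition can occur out of the last stage, we have $R_N = R$ and the stage-$N$ MDP coincides with the original MDP, so $Q_N^* = Q^*$ and $\pi_N^* = \pi^*$ on all of $\mathcal{S}$.

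For the inductive step, I would assume $Q_{i+1}^* = Q^*$ on $\mathcal{S}_{i+1}$, which in particular gives $V_{i+1}^*(s) = V^*(s)$ for every $s \in \mathcal{S}_{i+1}$. Fixing $s \in \mathcal{S}_i$ and $a \in \mathcal{A}$, let $\tau$ be the (random) first time the trajectory exits $\mathcal{S}_i$; by the consecutive-transition assumption this is precisely the entry time into $\mathcal{S}_{i+1}$, so $s_\tau \in \mathcal{S}_{i+1}$, and the trajectory stays inside $\mathcal{S}_i$ for all $t < \tau$. Hence every reward before the transition equals $R$, and the stage-$i$ return telescopes to
\[
\sum_{t=0}^{\infty} \gamma^{t} R_i(s_{t+1}, a_t) = \sum_{t=0}^{\tau-1} \gamma^{t} R(s_{t+1}, a_t) + \gamma^{\tau}\, V_{i+1}^{*}(s_\tau)\, \mathbf{1}\{\tau < \infty\},
\]
with the bonus absent on $\{\tau = \infty\}$, where the right side is just the ordinary infinite-horizon return inside $\mathcal{S}_i$. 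Using the inductive hypothesis to replace $V_{i+1}^*(s_\tau)$ by $V^*(s_\tau)$, taking expectations, and maximizing over stage-$i$ policies yields
\[
Q_i^{*}(s,a) = \max_{\pi_i}\, \mathbf{E}^{\pi_i}\Big[\, \sum_{t=0}^{\tau-1} \gamma^{t} R(s_{t+1}, a_t) + \gamma^{\tau} V^{*}(s_\tau)\, \mathbf{1}\{\tau < \infty\} \,\Big|\, s_0 = s,\, a_0 = a \,\Big].
\]

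The remaining step is to recognize the right-hand side as $Q^*(s,a)$: this is the optional-stopping form of the Bellman optimality principle. Since $\tau$ is a stopping time of the trajectory, one splits the original infinite-horizon return at time $\tau$ and replaces the tail by its optimal continuation value, obtaining $Q^*(s,a) = \max_{\pi}\, \mathbf{E}^{\pi}\big[\sum_{t=0}^{\tau-1}\gamma^{t}R(s_{t+1},a_t) + \gamma^{\tau}V^*(s_\tau)\,\mathbf{1}\{\tau<\infty\}\,\big|\, s_0=s, a_0=a\big]$. Because this objective depends on the policy only through its behavior on states visited strictly before $\tau$ — all of which lie in $\mathcal{S}_i$ — the maximum over full policies coincides with the maximum over stage-$i$ policies, so $Q_i^*(s,a) = Q^*(s,a)$ for all $s \in \mathcal{S}_i$; taking $\arg\max_a$ of both sides then gives $\pi_i^*(s) = \pi^*(s)$ and closes the induction. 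I expect the main obstacle to be making the optional-stopping Bellman identity fully rigorous — justifying the interchange of the tail conditional expectation with $V^*$ on $\{\tau<\infty\}$, controlling the event $\{\tau=\infty\}$ (where $\gamma<1$ and boundedness of $R$, hence of $V^*$, do the work), and reading the index in the definition of $R_i$ carefully enough that the telescoping identity above holds exactly, i.e. so that the bootstrap value is evaluated at the state $s_\tau$ entering stage $i+1$.
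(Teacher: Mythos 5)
Your proposal is correct and follows essentially the same route as the paper's own proof: backward induction from stage $N$, splitting the return at the stage-transition time, invoking the inductive hypothesis to identify $V_{i+1}^*$ with $V^*$ at the entry state, and recognizing the definition of the modified reward $R_i$. Your version is in fact somewhat more careful than the paper's, which treats the transition step $M$ as if it were deterministic and does not address the $\{\tau=\infty\}$ event or the restriction of the maximization to stage-$i$ policies.
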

\begin{proof}
See Appendix.
\end{proof}
 \textbf{Proposition 1} shows that $Q_1^*,..., Q_N^*$ in the stacking architecture is equivalent to $Q^*$ on different portion of $\mathcal{S}$. Therefore, although each sub-network $Q^*_i$ are only responsible for learning and controlling the system at $\mathcal{S}_i$, all of sub-networks collectively learn the optimal control policy.    

\subsection{SDQL architecture and algorithm}

In a typical deep Q learning\cite{Mnih2015Human-levelLearningb}, we use deep neural networks to approximate the optimal Q function, denoted by $Q(s, a; \theta)$, where $\theta$ is the neural network parameters. 
To improve sample efficiency, we usually employ off-policy learning via experience replay buffer $D$\cite{Mnih2015Human-levelLearningb}. In each iteration, we perform gradient descent using a minibatch of samples to update parameters $\theta$ with loss function
\begin{equation}
L_i(\theta_i) =  \mathbf{E}_{(s,a,r,s')\sim U(D)}[(r + \gamma \max_a'Q(s',a';\theta_i^-)-Q(s,a;\theta_i))^2]    
\end{equation}

where the subscript $i$ is iteration index, $(s,a,r,s')$ is the experience tuple uniformly sampled from stored experience $D$, $\theta'$ is the target network parameter that slowly synchronizes with the Q network parameter $\theta$.

In SDQL, as we decompose the control task into $N$ linear stages, we approximate the $Q$ function by $N$ neural networks, denoted by $Q_1(s, a;\theta^{(1)}), ..., Q_N(s, a;\theta^{(N)})$. We also use $N$ experience replay buffers $D_1,..., D_N$, with $D_i$ stores $(s,a,r,s')$ and $I(s) = i$.   We perform gradient descent using a minibatch of samples to update parameters $\theta^{(i)}$ with loss function
$$L^{(i)}(\theta_j^{(i}) =  \mathbf{E}_{(s,a,r,s')\sim U(D_i)}[(r + \gamma \max_a' Q_i(s',a';\theta_j^{-(i)})-Q_i(s,a;\theta_j^{(i)}))^2]$$
where the subscript $j$ is iteration index, $(s,a,r,s')$ is the experience tuple uniformly sampled from the experience buffer $D_i$, $\theta^{-(i)}$ is the target network parameter that slowly synchronizes with the Q network parameter $\theta^{(i)}$. 

Because the reward in each stage depends on the learned optimal value function in subsequent stages. The iterative updates of Q functions should be performed backwards, starting from stage N to stage 1.

The architecture and the training algorithm allows stacking different types of Q-learning networks, including single deep Q network such as DQN\cite{Mnih2015Human-levelLearning}, Dueling-DQN\cite{Wang2015DuelingLearning}, Double-Q DQN\cite{VanHasselt2016DeepQ-learning}, Rainbow\cite{Hessel2018Rainbow:Learning} NAF\cite{Gu2016ContinuousAcceleration}, and actor-critic networks such as DDPG \cite{Lillicrap2015ContinuousLearning} and TD3 \cite{Fujimoto2018AddressingMethods}.

The complete algorithm is showed in \textbf{Algorithm} \autoref{Alg-StackedDQN}.

\begin{figure}
\centering
\includegraphics[height=8cm]{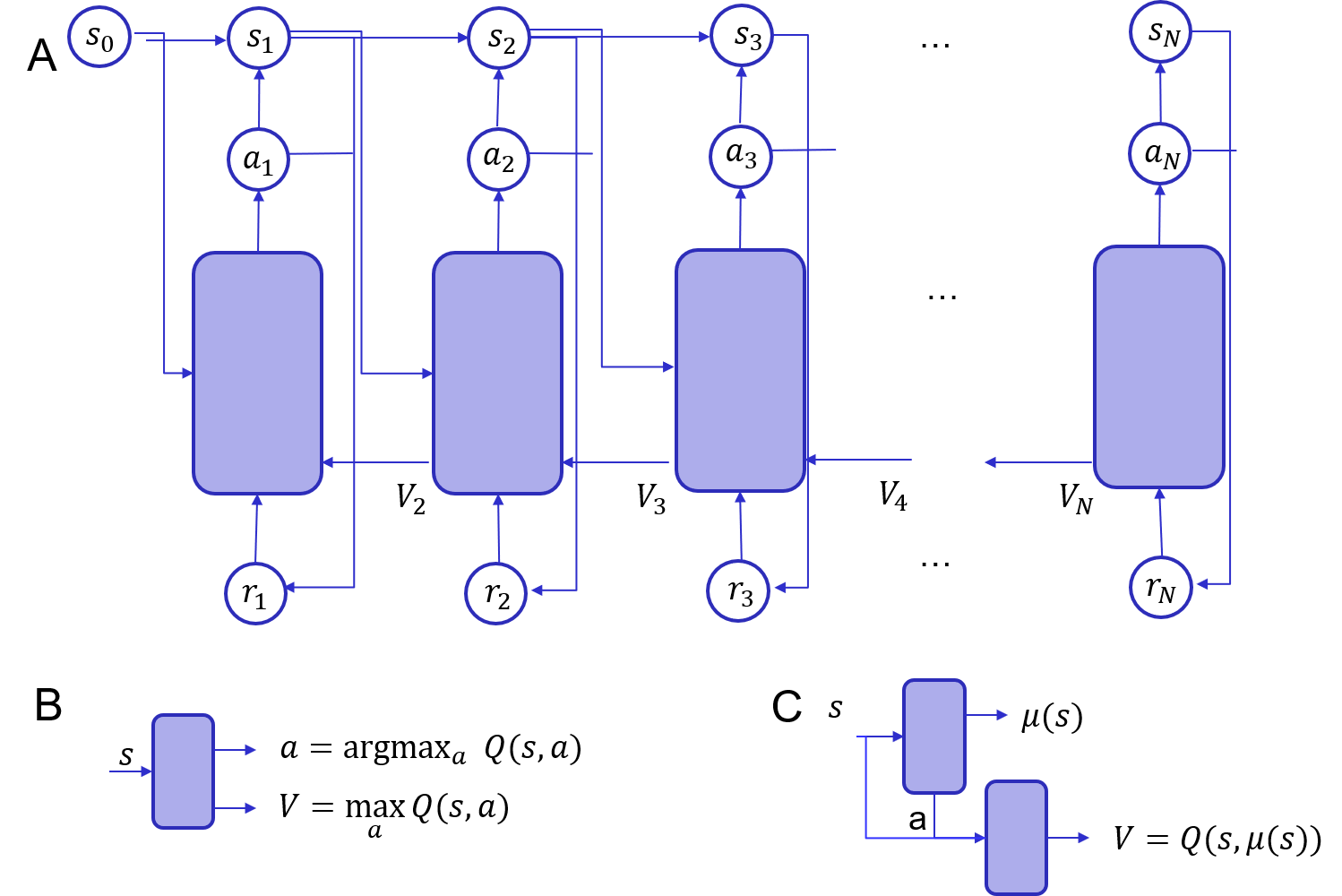}
  \caption{(A) Scheme of SDQL for multi-stage control tasks. Each submodule can be either a Q network (e.g., DON) (B) or a actor-critic network (e.g., DDPG) (C) network that approximates the Q function and the policy for a single stage. Q network produce action $a$ from current
state $s$ to evolve the system state forward and provides value estimate $V$ to train the network in the previous stages.}
  \label{fig:fig1}
\end{figure}

\begin{algorithm}\label{Alg-StackedDQN}
	\caption{Training framework for $N$-stage stacked Q network}
	Initialize $N$ replay memory $D_1,...,D_n$, and the neural networks (e.g., Q network, actor and critic network, and their associated target networks) for each module.\\
    \For{k = N-1, N-2, ..., 0}{
		\For{episode = 1,..., M}{
			Initialize system state to be within the state space $\mathcal{S}_{k} - \mathcal{S}_{k-1}$ and set its stage index to $k$. \\
			\For{ $t = 1,..., maximum\quad step$}{
                select actions $a_t$ using the module networks corresponding to current stage $m$.\\
                update current system state $s_t$ to new system state $s_{t+1}$ based on action $a_t$, observe stage reward $r_t = R_i(s_{t+1}, a_t)$,  \\
                Store transition $(s_t,a_t,r_t,s_{t+1})$ in $D_m$, where $m$ is the stage index of state $s_t$.\\
                
                \For{ i = N, ..., 1}{
                	Sample random minibatch of transitions $(s_j,a_j,r_j,s_{j+1})$ from  $D_i$.\\
                	Set $$y_j = \begin{cases}
                	r_j, & \text{ if episode terminates or state transitions at step} j + 1 \\
                	r_j + \gamma \max_{a'} Q_i(s_{j+1}, a'; \theta^{-(i)}), & \text{otherwise}                
                \end{cases}
              $$
                  Perform a gradient descent step on loss on Q values $(y_j - Q_i(s_j, a_j;\theta^{(i)}))^2$ with respect to $\theta^{(i)}$. Perform a gradient descent step to update actor network if using actor-critic module.\\
                  Gradually synchronize the target networks $\theta^{-(i)}$ with $\theta^{(i)}$.
                }               
			}
		}
	}
\end{algorithm}

\begin{remark}\hfill
\begin{itemize}
    \item We initialize system state to be within the state space $\mathcal{S}_{k} - \mathcal{S}_{k-1}$, instead of the whole $\mathcal{S}_k$ is similar to the initialization in reverse curriculum learning\cite{Florensa2017ReverseLearning}. \\
    \item In \textbf{Proposition 1}, we show that stage-wise Q functions $Q_1^*,...,Q_N^*$ are equivalent to $Q^*$ at different parts of the state space. As we use neural networks to approximate Q functions and the scheduled backward training, in practice, the approximation relation between $Q^*_i$ and $Q^*$ are more accurately described by 
\begin{align}
Q^*_i(s,a) \approx Q^*(s,a),& \forall s \in \mathcal{S}_{i} - \mathcal{S}_{i-1}, i = 2, ..., N \\
Q^*_1(s,a) \approx Q^*(s,a),& \forall s \in \mathcal{S}_{1}
\end{align}
\end{itemize}

\end{remark}

\section{Examples}
\subsection{Multi-stage Grid World navigation}
To demonstrate our algorithm, we first consider a toy program of multi-stage Grid World navigation. As showed in Fig. 2, an agent is within a 25$\times$25 Grid World and aims to navigate from the upper left corner to the target located at lower right corner. The Grid World is spatially divided into five parts, as annotated by the dashed lines, corresponding to the five stages in the whole navigation process. Let $(i, j)$ denote the position of the agent. The agent have four actions, which will enable the agent to move to $(i - 1, j), (i + 1, j), (i, j-1) , (i, j+1)$ in one step, respectively.  If an action causes the agent to move out of the boundary, the agent will not move. The stage transition from one stage to the next stage is triggered when the agent crosses the dashed boundaries. 

We solve this navigation problem by using five DQN modules stacked together. The training process follows the algorithm \ref{Alg-StackedDQN} and starts from the back to the front. We consider two cases to examine how the learned control strategies are adapted to different reward specifications. In the first case, we use homogeneous discount factors, that is, the same $\gamma = 0.99$ for all stages; in the second case, we use heterogeneous discount factors,  $\gamma = 0.99$ for the fourth stage and $\gamma = 0.9$ for other stages. 

The navigation trajectories from the trained multi-stage controller are showed in Fig. 2(b) and (c). For the homogeneous discount case, the agent takes steps to either move down or move right in all stages, which agrees with expectation. As a comparison, as the set a maximum discount factor in the fourth stage in the heterogeneous discount factor case, the agent goes up and right all the way from the bottom to the top at the fourth stage and moves straight right on other stages.  In this way, the agent spent minimal time in all stages except for the fourth, which maximizes the final total reward. To further under how stage-wise $Q$ functions adapt to the specified discount factors, we showed merged state value functions \footnote{$V(s) = V_i(s), \forall s\in \mathcal{S}_{i} - \mathcal{S}_{i-1}, i = 2, ..., N; V(s) = V_1(s), \forall s \in \mathcal{S}_1$} in Fig. 2(d) and (e). For the case of homogeneous discount factors, the value function agrees with the expectation such that state value decreases as distance to the target increases. For the case of heterogeneous discount factors, where the fourth stage has the largest discount factor, the state value decreases as distance to the target increases but decreases at a slower speed when the state is at the fourth stage.

\begin{figure}[H]
\centering
\includegraphics[height=8cm]{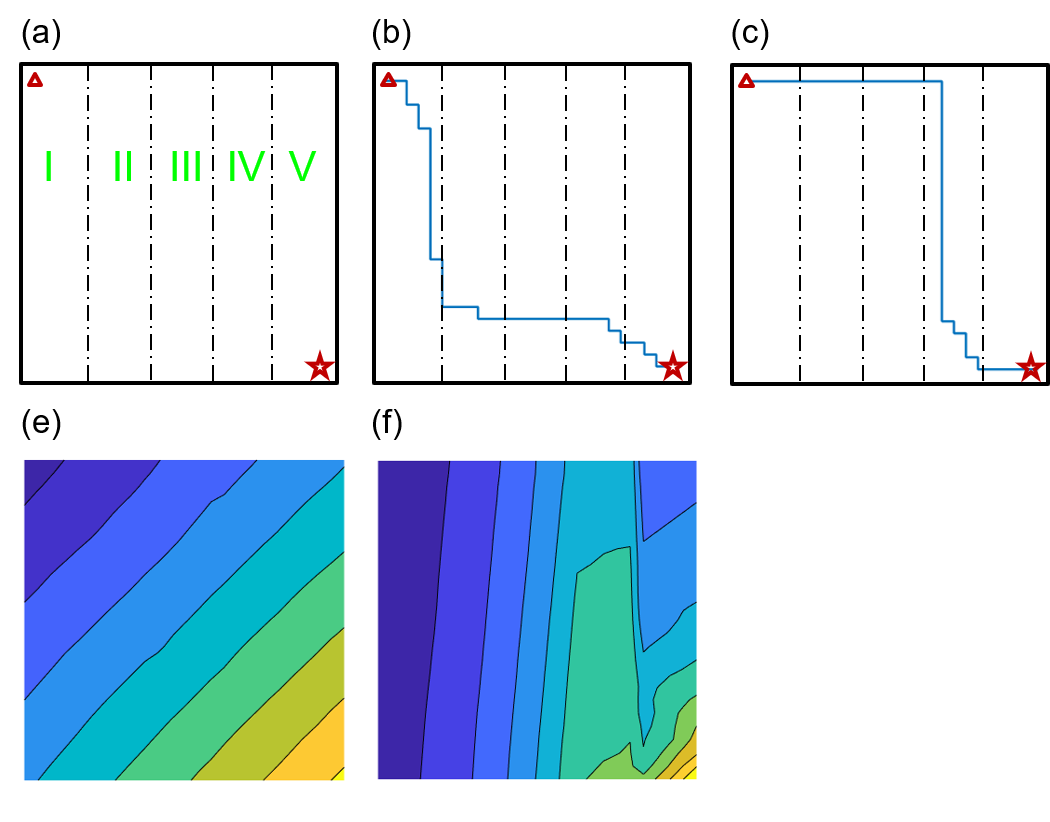}
  \caption{(a) The schematics of a five-stage Grid World navigation task where an agent aims to navigate from the upper left corner (up triangle) to the target  at the lower right corner(star). The Grid World is divided into five parts, annotated by the dashed line, corresponding to the five stages in the navigation process. (b) An example navigation trajectory when trained using homogeneous discount factors. (c) An example navigation trajectory when trained using heterogeneous discount factors. (d) The merged state value function when using homogeneous discount factor. (e) The merged state value function when using heterogeneous discount factor. }
  \label{fig:fig2}
\end{figure}

\subsection{Autonomous cooperative cargo transport}
In the second example, we consider a problem in autonomous targeted nano-drug delivery and precision surgery\cite{Li2017MicroDetoxification, Wang2012Nano/microscaleChallenges} using micro-robots (with size typically ranges from 100nm to 10um). State-of-the-art approaches to targeted drug delivery involve using micro/nano-robots to carry nano-sized drug to hard-to-reach locations (e.g., tissues, cancer cells) and unload the drug there.  Because of their miniaturized size, these robots in general do not carry energy supply by themselves but have the ability to directly utilize energy sources in the environment via many different mechanisms\cite{Mallouk2009PoweringNanorobots, Li2016RocketNanoscale, Yang2018OptimalColloidsb, Yang2019EfficientLearningb} including external electrical and magnetic fields, chemical catalysis, optical field, acoustics, etc. Realistic applications generally requires long-distance travel in a large scale working environment (e.g., tissues, blood vessels) exhibiting strong variations in terms of physical and chemical properties (e.g., temperature, PH, osmotic pressure, etc). The diverse working environments make it infeasible to carry out drug delivery task using one type of micro/nano-robots. One strategy to overcome this limitation is to use multiple micro/nano-robots to cooperatively transport a cargo across different parts of the environment. 

Fig. 3(a) shows a model obstacle environment for multiple micro-robots to realize cooperative cargo transport. The model environment is divided into two parts (dashed line), corresponding the two stages in the transport. In the first stage, one robot is responsible to carry the cargo and unload at the border; in the second stage, the robot is responsible to continue with where the first robot has left and carry the cargo to the final specified target site. We also assume the two micro-robots are functioned on different transport mechanisms and thus have different equations of motion. 
Specifically, the equation of motion of the robot at the first and second stage are given by 
\begin{align}
\begin{split}
    \partial_t x &= v\cos(\theta) + \xi_x(t)\\
    \partial_t y &= v\sin(\theta) + \xi_y(t) \\
    \partial_t \theta & = \xi_\theta(t)
\end{split}
\end{align}
and 
\begin{align}
\begin{split}
    \partial_t x &= v_max\cos(\theta) + \xi_x(t)\\
    \partial_t y &= v_max\sin(\theta) + \xi_y(t) \\
    \partial_t \theta & = w + \xi_\theta(t)
\end{split}
\end{align}
where $x$, $y$, and $\phi$ denote the robot's position and orientation, $w$ in the first equation of motion is the control input that controls the self-propulsion direction. $w$ takes continuous value between $[-1, 1]$. $v$ in the second equation of motion is the control input that controls the self-propulsion speed . $v$ takes binary values of $0$ and $v_max$, where $v_max$ is the maximum allowed speed. $\xi_x, \xi_y, \xi_\theta$ are independent zero mean white noise stochastic process satisfying $E[\xi_\theta(t)\xi_{\theta}(t')] = 2D_r\delta(t-t')$ and $E[\xi_x(t)\xi_x(t')] = E[\xi_y(t)\xi_y(t')] = 2D_t\delta(t-t')$, and $D_t$ and $D_r$ are transnational and rotational diffusivity coefficients.  Compared the traditional human-size robots, micro/nano-robots are usually under-actuated (i.e., not all degrees of freedom can be controled) and subject to Brownian motion.

We refer to the robot in the first stage as slider robot because it can directly control its moving direction but not its propulsion speed. We also refer to the robot in the second stage as the self-propelled robot because it can only control its propulsion speed but not its orientation. It has been showed in the previous studies\cite{Yang2018OptimalColloidsb, Mano2017OptimalSteering} that, a self-propelled micro-robot has to smartly exploit Brownian rotation and self-propulsion decisions to realize efficient transport, while a constantly self-propelled simply reduce to an effective random walk on longer time scale as its orientation fully randomizes\cite{Howse2007Self-MotileWalk}. In general, the slider robot have better mobility and we expect it will contribute more in the cooperative transport process.

Our multi-stage controller consists of a TD3 module \cite{Fujimoto2018AddressingMethods} and a DQN module stacked together. For a given micro-robot's state at step $n$, denoted by $s_n = (x_n, y_n, \theta_n)$, the observation $\phi(s_n)$ at $s_n$ consists of pixel-level image of $10\times 10$ around the micro-robot and the target's coordinate in the micro-robot's own coordinate system.

Fig. 3(b) shows the example trajectories of the two micro-robots transporting within the obstacle environment. As expected, the first slider robot will travel longer distance and finish its travel close to the target. The self-propelled robot will continue at where slider ends and finish the rest of the transport. In this way, the two robots can cooperatively transport the cargo in minimum time.
  
\begin{figure}[H]
\centering
\includegraphics[height=4cm]{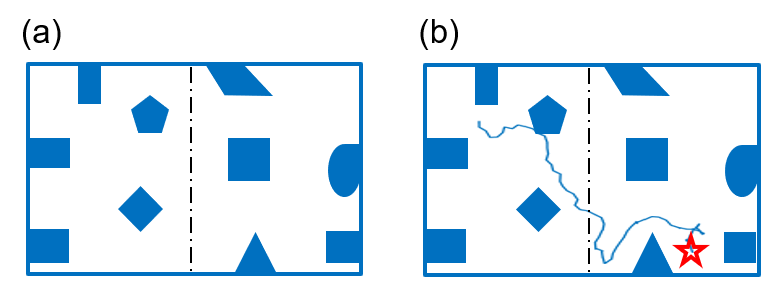}
  \caption{(a) The schematics of the cooperative cargo transport problem in an obstacle environment that consists of two stages of transport processes (divided by the dashed line). The first stage involves steering a slider micro/nano-robot to the second part of the environment and ends when the micro-robot crosses the dashed line. The second stage involves a self-propelled micro/nano-robot starting at the end state of the first stage and continue navigation towards the target (denoted by a star). (b) An example controlled trajectory of the two-stage cooperative cargo transport using SDQL.}
  \label{fig:fig3}
\end{figure}

\subsection{Precision manipulator control}

We now demonstrate SDQL can also be used to compute control policies for tasks involving multi-scale dynamics. Consider the task of controlling a two-arm manipulator (\ref{fig:fig4}) to reach a prescribed target with high precision. Specifically, we denote the length of the two arms by $L_1 = 1$ and $L_2 = 1$, the target by $(x_t, y_t)$, the position of the end effector by $(x_2, y_2)$. The goal is achieved if the effector-target distance $d \triangleq \sqrt((x_t-x_2)^2 + (y_t-y_2)^2) \leq 0.01$.
The geometry relationship (\ref{fig:fig4}) gives the following relationship: 
\begin{align}
\begin{split}
    x_1 &= L_1 \cos(\theta_1) \\
    y_1 &= L_1 \sin(\theta_1) \\
    x_2 &= x_1 + L_2 \cos(\theta_1 + \theta_2) \\
    y_2 &= y_1 + L_2 \sin(\theta_1 + \theta_2) 
\end{split}
\end{align}
The control on the arm angle $\theta_1, \theta_2$ can be formulated as a discrete-time dynamic model given by 
\begin{equation}
  \theta_1(t + \Delta t) = \theta_1 + u_1, \theta_2(t + \Delta t) = \theta_2 + u_2,  
\end{equation}
where $\Delta t$ is the control update time interval, $u_1, u_2 \in [-u_max, u_max]$, and $u_max = 0.25$ is the maximum allowed angle movement in one step. 

We aim to obtain a control strategy that can move the two-arm configuration from an arbitrary initial configuration to a configuration such that the end effector is sufficiently close to the target position.
To make our test more general, we only use the natural reward where the reward is 1 if $d \leq 0.01$ and 0 otherwise.

Therefore, there exist two length scales separated in the control task: a large length scale when the end effector is far away from the target and large step size to get to the target; a small length scale when the end effector is near the target and small steps are preferred for fine adjustments.

Using single length scale via deep reinforcement learning can cause sparse reward issue (reaching the target is quite rare) if the length scale used is relatively large or cause temporally extended delayed reward (reaching the target takes considerably more steps) in the length scale used is relatively small. The length-scale separation can be addressed by formulating the control task into a two-stage control task that uses different length scale in each stage. We introduce the two-stage stage structure in the following way. The first stage state space is characterized by $d>= 0.2$ and the transition from the first stage to the second stage is triggered by $d < 0.2$. In the second stage, we set the maximum allowed action by 0.05 and set the normalizing factor for the distance by 0.05. The use of smaller action can also be interpreted as we are evolving the dynamics at a proportionally smaller time scale. 

For the two-stage control task, we stack two DDPG actor-critic in the SDQL architecture to seek the optimal control policy. The input state is the concatenation of $(x_1,y_1)$, $(x_2,y_2)$ and target distance to the end effector $((x_t-x_2), (y_t-y_2))$.

By using a small length scale in the second stage, we increase the probability that the agent can receive positive signal in the second stage during its exploration and exploitation process. The learned value function in the second stage then can propagate towards to the first stage Q network via bootstrapping.

Fig. 4(b) shows the example trajectories of controlling the two-arm robot, starting from randomly initialized positions,  to reach target position (1,1) with high precision.

\begin{figure}[H]
\centering
\includegraphics[height=6cm]{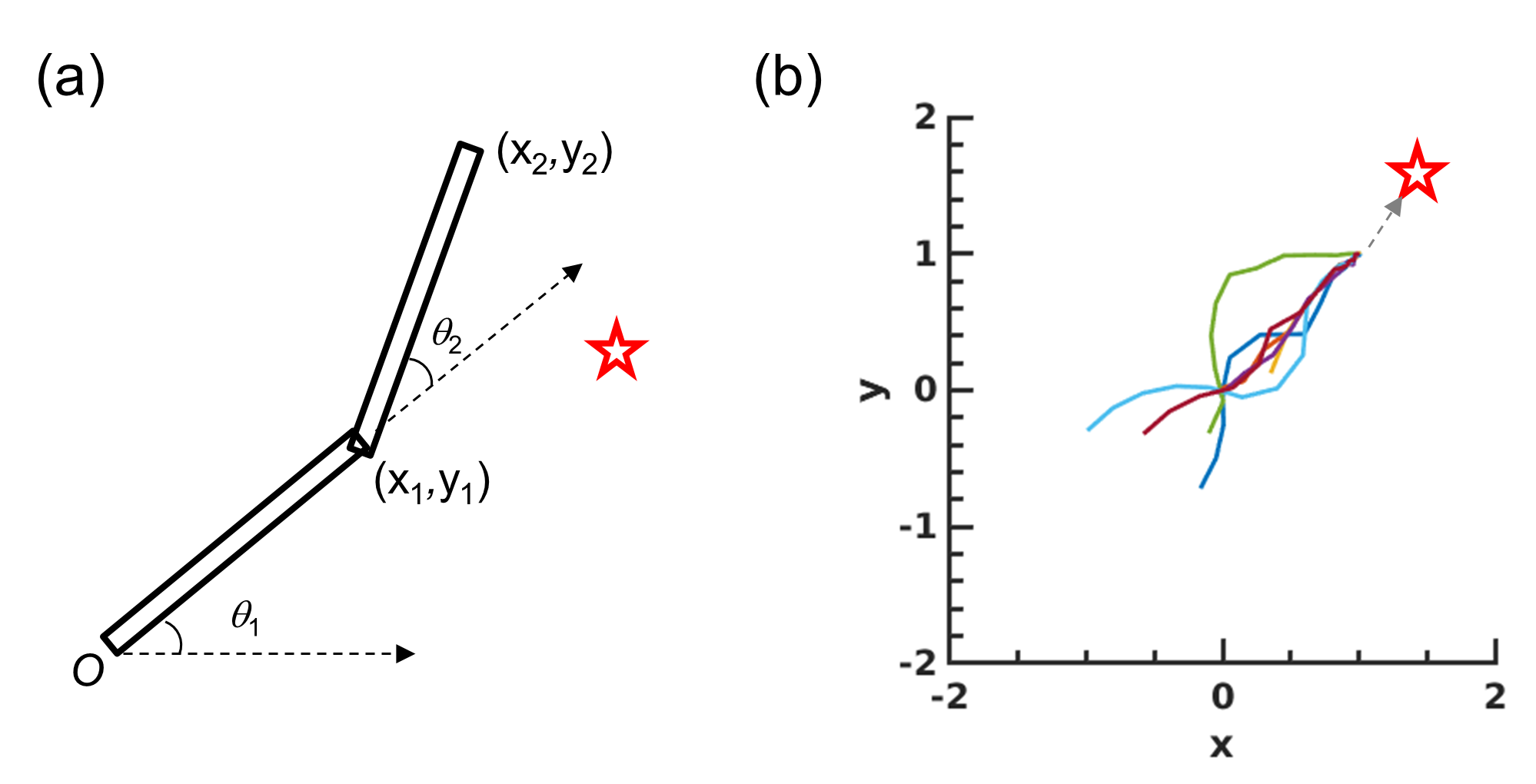}
  \caption{(a) The schematics of a two-arm robot manipulator control task that aims to control the manipulator to reach the target denoted by the star. (b) Representative trajectories of the end effector position moving towards the target located at (1,1). }
  \label{fig:fig4}
\end{figure}

\section{Related Work}

End-to-end deep reinforcement learning have recently achieved remarkable performance in various domains includes games\cite{Mnih2015Human-levelLearningb}\cite{Silver2016MasteringSearch}, robotics\cite{Levine2015End-to-EndPolicies}, bio-mechanics\cite{Verma2018EfficientLearning}, medicine\cite{Popova2018DeepDesign}, physics, and chemical science\cite{Zhou2017OptimizingLearning} etc. For multi-stage control tasks,  although it is possible to use one end-to-end large neural network that simultaneously obtain all policies for each stage, the training of such a giant complex neural network for control tasks with high-dimensional state space and sparse rewards can be considerably brittle due to the deadly triad of nonlinear function approximator, bootstrapping, and off-policy learning \cite{Achiam2019TowardsQ-Learning}. By introducing stage structure into single stage control task or exploits the existing stage structure, SDQL enables decomposition of long horizon complex task into smaller one, therefore stabilize the learning process compared to training a giant complex network.

The stacking structure and the backward training scheme resembles the reverse curriculum learning\cite{Bengio2009CurriculumLearning, Florensa2017ReverseLearning, Matiisen2017Teacher-StudentLearning}. Curriculum learning usually employs a single network for the whole task and schedules the learning of tasks of different difficulties to accelerate the training process. During learning, tasks of different levels have to be sampled constantly and at increasing difficult levels to accerlate training and prevent to agent forget skills already learned. In SDQL, multiple small neural network are responsible for learning different skills, starting with easier ones (i.e., later stage), thus improving the stability in the training and avoid the forgetting problem.

Finally, SDQL can be viewed as a regularized version of hierarchical reinforcement learning \cite{Bacon2017TheArchitecture}\cite{Frans2017MetaHierarchies}\cite{Kumar2018Robotics}\cite{Nachum2018Data-EfficientLearning}. Hierarchical reinforcement learning has to learn hierarchy and temporal abstraction from data, in the form of high-level policy (meta controller), to select subgoals for the low level controller to achieve. SDQL, instead, directly exploits a given hierarchical structure.

\section{conclusion}

We have introduced SDQL, a flexible stacking architecture that enables application of deep Q learning to general multistage optimal control tasks. Our theoretical analysis shows h the decomposition of Q values in the stacking architecture can be achieved in a consistent way. A general off policy training algorithm is developed to train the stacked networks via backward state-value propagation.

We used different examples to demonstrate the strategic application of SDQL to learn competitive strategies in problem with characteristics of high-dimensional state space, heterogeneous action space(both discrete and continuous), multiple scales, and sparse and delayed rewards.

There can be different modification to our architecture.
For example, different Q network can share the parameters of first several layer. Different enhancement to DON and DDPG can be readily applied. Depending on the application, parameters of later stage networks can be copied to early stage network as pertained
parameters.

\appendix

\section{Proof of Proposition 1}
\begin{proof}
We use mathematical induction to prove the proposition.

First consider the case of $I(s_0) = N$. Based on the definition of stage reward function $R_N = R$, we have 
\begin{align*}
Q^*(s, a) &= \max_\pi \mathbf{E}^\pi[\sum_{t=0}^\infty \gamma^t R(s_{t+1}, a_t)|s_0 = s, a_0 = a, \pi] \\
&= \max_\pi \mathbf{E}^\pi[\sum_{t=0}^\infty \gamma^t R_N(s_{t+1}, a_t)|s_0 = s, a_0 = a, \pi] \\
&= Q^*_N(s,a)
\end{align*}
Similarly, we have $\pi^* = \pi^*_N$.

Suppose $Q^*_i(s,a) = Q^*(s,a), \forall s\in \mathcal{S}_i, i= K, ..., N.$ and $\pi^*_i(s) = \pi^*(s), i=1, ..., N.$.
Consider $I(s_0) = K - 1$, and stage transition occurs at step $M$ we have
\begin{align*}
Q^*(s, a) &= \max_\pi \mathbf{E}^\pi[\sum_{t=0}^\infty \gamma^t R(s_{t+1}, a_t)|s_0 = s, a_0 = a, \pi] \\
&=\max_\pi \mathbf{E}^\pi[\sum_{t=0}^{M-1} \gamma^t R(s_{t+1}, a_t) +\sum_{t=M}^\infty \gamma^t R(s_{t+1}, a_t) |s_0 = s, a_0 = a, \pi, I(s_{M})=K] \\
&=\max_\pi \mathbf{E}^\pi[\sum_{t=0}^{M-1} \gamma^t R(s_{t+1}, a_t) +\gamma^M \sum_{t=0}^\infty \gamma^{t} R(s_{t+M + 1}, a_t) |s_0 = s, a_0 = a, \pi, , I(s_{M})=K] \\
&=\max_\pi \mathbf{E}^\pi[\sum_{t=0}^{M-1} \gamma^t R(s_{t+1}, a_t) + |s_0 = s, a_0 = a, \pi] + \gamma^M V^*_{K}(s_{M}) \\
&=\max_\pi \mathbf{E}^\pi[\sum_{t=0}^{K-1} \gamma^t R(s_{t+1}, a_t) |s_0 = s, a_0 = a, \pi] + \gamma^M V^*_{K}(s_{M}) \\
&=\max_\pi \mathbf{E}^\pi[\sum_{t=0}^{M} \gamma^t R_{K-1}(s_t, a_t) |s_0 = s, a_0 = a, \pi]  \\
& = Q^*_{K-1}(s, a)
\end{align*}
where we use the fact that
$$\max_\pi E^\pi[\sum_{t=M}^\infty \gamma^t R(s_{t+1}, a_t) |s_M, , I(s_{M})=K]=V^*(s_M) $$
and the definition of $R_{K-1}$. 
\end{proof}
\medskip
\bibliographystyle{unsrt}  
\bibliography{references}
\end{document}